\def\eqref#1{equation~\ref{#1}}
\def\1{\bm{1}}
\def\eps{{\epsilon}}
\DeclareMathAlphabet{\mathsfit}{\encodingdefault}{\sfdefault}{m}{sl}
\SetMathAlphabet{\mathsfit}{bold}{\encodingdefault}{\sfdefault}{bx}{n}
\newcommand{\E}{\mathbb{E}}
\newcommand{\R}{\mathbb{R}}
\theoremstyle{plain}
\newtheorem{thm}{Theorem}
\newtheorem{lem}{Lemma}
\theoremstyle{definition}
\theoremstyle{remark}
\newtheorem{rem}{Remark}
\newcommand{\cF}{\mathcal{F}}
\newcommand{\cG}{\mathcal{G}}
\newcommand{\cH}{\mathcal{H}}
\newcommand{\cN}{\mathcal{N}}
\newcommand{\cR}{\mathcal{R}}
\newcommand{\cS}{\mathcal{S}}
\newcommand{\cX}{\mathcal{X}}
\newcommand{\cY}{\mathcal{Y}}
\newcommand{\cZ}{\mathcal{Z}}
\newcommand{\md}{\mathrm{d}}
\definecolor{linkcolor}{rgb}{0.0,0.0,0.55}
\newcommand{\be}{\begin{equation}}
\newcommand{\ee}{\end{equation}}
\newcommand{\bi}{\begin{itemize}}
\newcommand{\ei}{\end{itemize}}
\newcommand{\bn}{\begin{enumerate}}
\newcommand{\en}{\end{enumerate}}
\newcommand{\br}{\begin{rem}}
\newcommand{\er}{\end{rem}}
\newcommand{\Eof}[1]{ {\E}\left[#1\right]}
\newcommand{\Esub}[2]{ {\E}_{#1}\left[#2\right]}
\begin{document}

\title{{Towards a  Learning Theory of Representation Alignment}}



\author{Francesco Insulla \thanks{Institute of Computational and Mathematical Engineering Stanford University, Stanford, CA, USA. Email:\href{mailto:franinsu@stanford.edu}{franinsu@stanford.edu} }$\quad$
Shuo Huang \thanks{Istituto Italiano di Tecnologia, Genoa, Italy. Email: \href{mailto:shuo.huang@iit.it}{shuo.huang@iit.it}}$\quad$
Lorenzo Rosasco \thanks{MaLGa Center – DIBRIS – Università di Genova, Genoa, Italy; also CBMM – Massachusets Institute of Technology,
USA; also Istituto Italiano di Tecnologia, Genoa, Italy. Email: \href{mailto:lrosasco@mit.edu}{lrosasco@mit.edu} }
}

\date{}




\maketitle

\begin{abstract}
It has recently been argued that AI models' representations are becoming aligned as their scale and performance increase. Empirical analyses have been designed to support this idea and conjecture the possible alignment of different representations toward a shared statistical model of reality. In this paper, we propose a learning-theoretic perspective to representation alignment. First, we review and connect different notions of alignment based on metric, probabilistic, and spectral ideas. Then, we focus on stitching, a particular approach to understanding the interplay between different representations in the context of a task. Our main contribution here is relating properties of stitching to the kernel alignment of the underlying representation. 
Our results can be seen as a first step toward casting representation alignment as a learning-theoretic problem.
\end{abstract}
\section{Introduction}

In recent years, as AI systems have grown in scale and performance, attention has moved towards universal models that share architecture across modalities. Examples of such systems include CLIP \citep{radford2021learning}, VinVL \citep{zhang2021vinvl}, FLAVA \citep{singh2022flava}, OpenAI's GPT-4 \citep{achiam2023gpt}, and Google's Gemini \citep{team2023gemini}. These models are trained on diverse datasets containing both images and text and yield embeddings that can be used for downstream tasks in either modality or for tasks that require both modalities. The emergence of this new class of multimodal models poses interesting questions regarding alignment and the trade-offs between unimodal and multimodal modeling. While multimodal models may provide access to greater scale through dataset size and computational efficiency, how well do features learned from different modalities correspond to each other? How do we mathematically quantify and evaluate this alignment and feature learning across modalities?

Regarding alignment, \cite{huh2024position} observed that as the scale and performance of deep networks increase, the models' representations tend to align. They further conjectured that the limiting representations accurately describe reality - known as \textit{Platonic representation hypothesis}. Their analysis also suggests that alignment correlates with performance, implying that improving the alignment of learned features across different modalities could enhance a model’s generalization ability. However, alignment across modalities has yet to be evaluated in a more interpretable manner, and theoretical guarantees of alignment under realistic assumptions are still lacking.

{

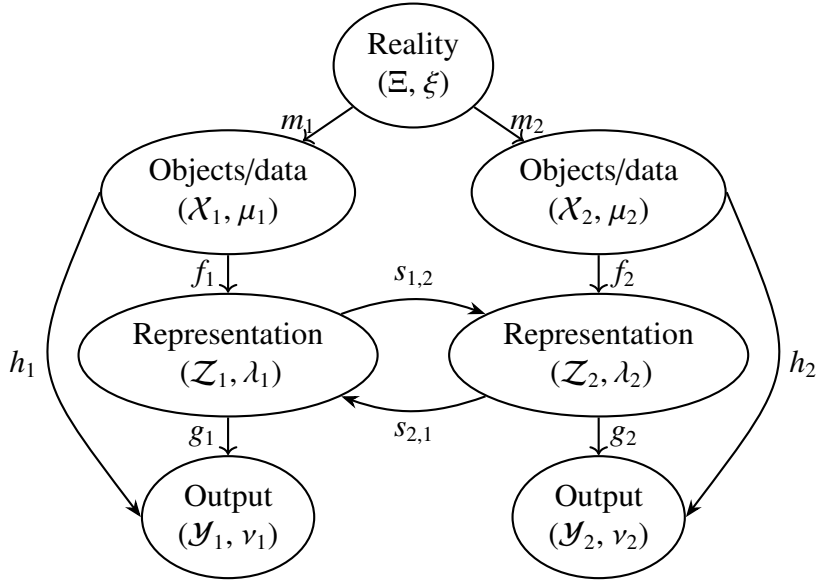
\begin{figure}[h]
{
\usetikzlibrary{positioning, shapes.geometric, arrows.meta}
\begin{center}
 \begin{tikzpicture}[
    node distance=0.5cm and 0.5cm,
    every node/.style={align=center},
    arrow/.style={-Stealth, thick},
    thick,
    ]

\node (reality) [ellipse, draw] {Reality\\($\Xi$, $\xi$)};
\node (data1) [ellipse, draw, below left=of reality] {Objects/data\\($\cX_1$, $\mu_1$)};
\node (data2) [ellipse, draw, below right=of reality] {Objects/data\\($\cX_2$, $\mu_2$)};
\node (rep1) [ellipse, draw, below=of data1] {Representation \\ \,$(\mathcal{Z}_1,\lambda_1)$};
\node (rep2) [ellipse, draw, below=of data2] {Representation \\ \,$(\mathcal{Z}_2,\lambda_2)$};
\node (output1) [ellipse, draw, below=of rep1] {Output\\($\mathcal{Y}_1$, $\nu_1$)};
\node (output2) [ellipse, draw, below=of rep2] {Output\\($\mathcal{Y}_2$, $\nu_2$)};
\draw[->, thick] (reality) -- node[left] {$m_1$} (data1);
\draw[->, thick] (reality) -- node[right] {$m_2$} (data2);
\draw[->, thick] (data1) -- node[left] {$f_1$} (rep1);
\draw[->, thick] (data2) -- node[right] {$f_2$} (rep2);
\draw[->, thick] (rep1) -- node[left] {$g_1$} (output1);
\draw[->, thick] (rep2) -- node[right] {$g_2$} (output2);
\draw[arrow] (rep1) to[out=20, in=160] node[above] {{$s_{1,2}$}} (rep2);
\draw[arrow] (rep2) to[out=200, in=-20] node[below] {{$s_{2,1}$}} (rep1);
\draw[arrow] (data1.west) to[ looseness=1.5, bend right] node[left] {{$h_1$}} (output1.west);
\draw[arrow] (data2.east) to[looseness=1.5, bend left] node[right] {{$h_2$}} (output2.east);
\end{tikzpicture}
\end{center}
}
\caption{Diagram illustrating the process of multi-modal learning. It contains spaces and measures of reality, objects/data, representation, and outputs as well as the functions connecting them. A detailed explanation of these symbols is in Section \ref{sec:settings}.}
\label{fig:wholePic}
\end{figure}

One way to quantify alignment is by kernel alignment, introduced by \cite{cristianini2001kernel}, which evaluates the correlation of two kernel matrices $K_{1,n}, K_{2,n}$ through Frobenius norms 

\[\widehat{A}(K_{1,n},K_{2,n}) = \frac{\langle K_{1,n}, K_{2,n}\rangle_F}{\sqrt{\langle K_{1,n}, K_{1,n}\rangle_F\langle K_{2,n}, K_{2,n}\rangle_F}}.\] Following this direction, methods like Centered Kernel Alignment (CKA) \citep{kornblith2019similarity} and Singular Vector Canonical Correlation Analysis (SVCCA) \citep{raghu2017svcca} were developed to compare learned representations. 
Another class of metrics is derived from independence testing, including the Hilbert-Schmidt Independence Criterion (HSIC) \citep{gretton2005measuring} and Mutual Information (MI) \citep{hjelm2018learning}. 
However, further research is needed to clarify the relationships among these methods and other frameworks for assessing alignment.}



To quantify the alignment of representation conditioned on a task, one approach is to use the stitching method \citep{lenc2015understanding}. \cite{bansal2021revisiting} revisited this technique and used it to highlight that good models trained on different objectives (supervised vs self-supervised) have similar representations. By evaluating how well one representation integrates into another model, stitching provides a more interpretable framework for assessing alignment. To describe the setup, we use $h_{1,2}:=g_2 \circ s_{1,2}\circ f_1$ to represent the function after stitching from model $1$ to model $2$ (Figure \ref{fig:wholePic} gives a detailed illustration of the whole process). Here $g_q$ and $f_q$ are parts of model $\cH_q: \cX_q \to \cY_q$ with $q =1,2$, and $s_{1,2}$ is the stitcher. We consider the generalization error after stitching between two models:
$$\cR(g_2 \circ s_{1,2}\circ f_1)=\Eof{\ell(h_{1,2}(x), y)}. $$
We can use the risk of the stitched model in excess of the risk of model 2
\[\min_{s_{1,2}} \cR(h_{1,2}) -\min_{h_2 \in \cH_2} \cR(h_2)\]
to quantify the impact of using different representations, fixing $g_2$.

In this paper, we aim to formalize and refine some of these questions, and our contributions are summarized as follows:
\begin{enumerate}[label=(\alph*),left=0pt]
    \item We compile different definitions of alignment from various communities, demonstrate their connections, and give spectral interpretations.
    \begin{itemize}[left=0pt]
        \item Starting from the empirical Kernel Alignment (KA), we reformulate empirical KA and population version of KA using feature/representation maps, operators in Reproducing Kernel Hilbert Space (RKHS), and spectral interpretation. In addition, we discuss the statistical properties of KA.
        \item  We integrate various notions of alignment, ranging from kernel alignment in independence testing and learning theory to measure and metric alignment, and demonstrate their relationships and correlations. This comprehensive exploration provides a deeper understanding for practical applications.
    \end{itemize}
    \item We provide the generalization error bound of linear stitching with the kernel alignment of the underlying representation.
        \begin{itemize}[left=0pt]
            \item A linear  $g_q$  results in the stitching error being equivalent to the risk from the  model  $\cH_q$. This occurs, for example, when  $\cH_q$  represents RKHSs or neural networks, then $ g_q$  is a linear combination of features in RKHSs or the output linear layers of neural networks.
            \item The excess stitching risk can be bounded by kernel alignment when  $g_q $ are nonlinear functions with the Lipschitz property. A typical scenario is stitching across the intermediate layers of neural networks.
            \item For models involving several compositions such as deep networks, if we stitch from a layer further from the output to a layer closer to the output (stitching forward) and $g_q$ is Lipschitz, the difference in risk can be bounded by stitching.
        \end{itemize}
\end{enumerate}

\paragraph{Structure of the paper} In the following of this paper, we introduce the problem settings and some notations in Section \ref{sec:settings}. Different definitions for representation alignment from different communities and the relationship among them will be derived in Section \ref{sec:alignment}. Section \ref{sec:stitching} demonstrates that the stitching error could be bounded by the kernel alignment metric. And the conclusion is given in Section \ref{sec:conclusion}.

\section{Preliminaries}\label{sec:settings}
Empirical results demonstrate that well-aligned features significantly enhance task performance. However, there is a pressing need for more rigorous mathematical tools to formalize and quantify these concepts in uni/multi-modal settings. In this section, we provide a mathematical formalization of uni/ multi-modal learning, introducing key notations to facilitate a deeper understanding of the underlying processes.


\paragraph{Setup} Without loss of generality, we focus on the case of two modalities, as illustrated in Figure~\ref{fig:wholePic}, which outlines the corresponding process. For \(q = 1, 2\), let \((\mathcal{X}_q, \mu_q)\) and \((\mathcal{Z}_q, \lambda_q)\) be probability spaces, and let \(\mathcal{F}_q\) be the space of functions \(f_q : \mathcal{X}_q \to \mathcal{Z}_q = \mathbb{R}^{d_q}\). We regard \(\mathcal{X}_q\) as the space of \textbf{objects} (or data), \(\mathcal{F}_q\) as the space of representation (or embedding) maps, and \(\mathcal{Z}_q\) as the space of \textbf{representations}. We relate \(\mu_q\) and \(\lambda_q\) by assuming \( \lambda_q = (f_q)_{\#}\mu_q \)\footnote{\( (f_q)_\#\mu_q \) is the pushforward measure of \(\mu_q\) defined as \( (f_q)_\#\mu_q(A)=\mu_q(f_q^{-1}(A)) \). In terms of random variables \(X_q\) and \(Z_q\) with measures \(\mu_q\) and \(\lambda_q\), respectively, this is equivalent to \(f_q(X_q)\) and \(Z_q\) being equal in law.}. We also assume that \(\mu_1\) and \(\mu_2\) are the marginals of a joint probability space \((\mathcal{X}, \mu)\) with \( \mathcal{X} = \mathcal{X}_1 \times \mathcal{X}_2,\quad \mu_q = (\pi_q)_{\#}\mu \), where \(\pi_q : \mathcal{X} \to \mathcal{X}_q\) is the projection map. Moreover, let \((\mathcal{Y}_q, \nu_q)\) be the task-based \textbf{output} spaces and define \(\mathcal{G}_q = \{g_q: \mathcal{Z}_q \to \mathcal{Y}_q\}\) with \(\nu_q = (g_q)_{\#}\lambda_q\). Each overall model is generated by \( \mathcal{H}_q := \{h_q: \mathcal{X}_q \to \mathcal{Y}_q \mid h_q = g_q \circ f_q\} \).

\paragraph{Reality}
Consider a space of abstract objects, called the \emph{reality space} and denoted by \(\Xi\), which generates the observed data in various modalities through maps \(m_q : \Xi \to \cX_q\). These maps may be bijective, lossy, or stochastic. Reality can be modeled as a probability space \((\Xi, \xi)\). Alternatively, one may define reality as the joint distribution over modalities by setting \(m_q = \pi_q\).


\paragraph{Uni/Multi-modal}
We may want to consider the case of a single modality, where only one data space exists, versus multiple modalities, where several such spaces are present. Two modalities are deemed equal if \(\pi_1 = \pi_2\).

\paragraph{Representation alignment}
A representation mapping is a function $f : \cX \to \R^{d}$  that assigns a latent feature vector in  $\R^{d}$  to each input in the data domain $ \cX$. Alignment provides a metric to evaluate how well the latent feature spaces obtained from different representation mappings, whether from uni-modal or multi-modal data, are aligned or similar.  Commonly used metrics for measuring alignment include those derived from kernel alignment, contrastive learning, mutual information, canonical correlation analysis, and cross-modal mechanisms, among others. However, they are introduced in a very fragmented manner, without an integrated or unified concept. A detailed introduction and analysis of these methods will be provided in Section \ref{sec:alignment}.

\section{Frameworks for Representation Alignment}\label{sec:alignment}

In this section, we describe various definitions of representation alignment from different communities and demonstrate the relationship among them. We begin with a detailed presentation of empirical and population Kernel Alignment and its statistical properties. We then cover other notions of alignment coming from metrics, independence testing, and probability measures, as well as their spectral interpretations. We draw connections to kernel alignment which emerges as a central object.

\subsection{Kernel alignment (KA)}
Based on the work of \citet{cristianini2001kernel}, who introduced the definition of kernel alignment using empirical kernel matrices, we propose different perspectives to understand kernel alignment in both empirical and population settings and derive its statistical properties accordingly.

A reproducing positive definite kernel \(K: \mathcal{X} \times \mathcal{X} \to \mathbb{R}\) captures the notion of similarity between objects by inducing an inner product in the associated reproducing kernel Hilbert space (RKHS) $\mathcal{H}$. Specifically, \( K(x, x') = \langle f(x), f(x') \rangle \) for any representation (feature) map $f \in \mathcal{H}$, and \( x, x' \in \mathcal{X} \).
For the multi-modal case, we define 
\( K_q(x, x') := \tilde{K}_q(\pi_q(x), \pi_q(x')) = \tilde{K}_q(x_q, x_q') \),  where $\tilde{K}_q$ is the reproducing kernel associated with $\mathcal{H}_q$. In other words, \( K_q \) acts on \( x = (x_1, x_2) \) by first applying the projection \( \pi_q(x) = x_q \). In the following, the subscript \( x_q \) denotes the \( q \)-th modality, and the superscript \( x^i \) indicates the \( i \)-th sample.

\subsubsection{Empirical KA}
 
 From \cite{cristianini2001kernel}, we adopt the following formulation for kernel alignment for kernel matrix $K_{q,n} \in \R^{n \times n}$ with samples $\{x^i\}_{i=1}^n $ drawing according to the probability measure $ \mu$
\begin{equation*}\label{equ:empiricalKA}
     \widehat{A}(K_{1,n},K_{2,n}) = \frac{\langle K_{1,n}, K_{2,n}\rangle_F}{\sqrt{\langle K_{1,n}, K_{1,n}\rangle_F\langle K_{2,n}, K_{2,n}\rangle_F}},
\end{equation*}
where  $\langle K_{1,n},K_{2,n} \rangle_F = \sum_{i,j=1}^n K_{1,n}(x^i,x^j)K_{2,n}(x^i,x^j)$. One modification is to first demean the kernel by applying a matrix $H=I_n-\tfrac{1}{n}1_n1_n^T$ on the left and right of each $K_{q,n}$ with $I \in \R^{n \times n}$ being the identity matrix and $1_n$ being the ones vectors. This results in Centered Kernel Alignment (CKA).

\paragraph{Representation interpretation of KA} Denote the empirical cross-covariance matrix between the representation maps $f_1$ and $f_2$ as $\widehat{\Sigma}_{1,2}=\Esub{n}{f_1(x)f_2(x)^T }=\frac{1}{n}\sum_{i=1}^n f_1(x^i)f_2(x^i)^T \in \R^{d_1 \times d_2}$. Then the empirical KA will become
\begin{equation}\label{equ:repreKA}
    \widehat{A}(K_{1,n},K_{2,n})=\frac{\|\widehat{\Sigma}_{1,2}\|_F^2}{\|\widehat{\Sigma}_{1,1}\|_F\|\widehat{\Sigma}_{2,2}\|_F}.
\end{equation}

\paragraph{RKHS operator interpretation of KA} { Inspired by the \eqref{equ:repreKA}, we construct a consistent definition of Kernel Alignment using tools of RKHS, where it suffices to consider output in one dimension\footnote{ We can generalize the definition to vector-valued functions by recasting $h_q:\cX_q \to \R^{t_q}$ as $h_q : \cX_q \times [t_q] \to \R$ i.e. with kernels of the form $K_q(x_q,i,x_q',i')$ for integers $1\leq i,i'\leq t_q$.}. Consider RKHS $\cH_q$ containing functions $h_q:\cX_q\to\R$ with kernel $K_q$.} Given evaluation (sampling) operators $\widehat{S}_{q}:\cH_q\to\R^n$ defined by $(\widehat{S}_q h_q)^i=h_q(x_q^i)=\langle h_q, K_{q, x_q^i}\rangle$. It is not hard to check that the adjoint operator $\widehat{S}_q^* : \R^n \to \cH_q$ can be written as $\widehat{S}_q^*(w^1, \ldots, w^n )= \sum_{i=1}^n w^i K_{q}(x_q^i, \cdot)$ and the empirical kernels can be written as $K_{q,n}/n = \widehat{S}_q\widehat{S}_q^*$ \citep{smale2004shannon,de2005learning}. Then the empirical KA may be written as
\[
    \widehat{A}(K_{1,n}, K_{2,n}) = \frac{\langle \widehat{S}_1\widehat{S}_1^*, \widehat{S}_2\widehat{S}_2^*\rangle_F }{\|\widehat{S}_1\widehat{S}_1^*\|_F\|\widehat{S}_2\widehat{S}_2^*\|_F} = \frac{\| \widehat{S}_1^* \widehat{S}_2\|_F^2 }{\|\widehat{S}_1^*\widehat{S}_1\|_F\|\widehat{S}_2^*\widehat{S}_2\|_F},
\]
where $\widehat{S}_1^*\widehat{S}_2=\tfrac{1}{n}\sum_i K_{1,x_1^i}\otimes K_{2,x_2^i}$ and it coincides with the literature about learning theory with RKHS. 




\subsubsection{Population version of KA}

For the population setting  (infinite data limit of the evaluation operator) in $L^2$, the restriction operator $S_q: \cH_q \to L^2(\cX_q,\mu)$ is defined by $S_q h_q (x)= \langle h_q, K_{q}(x, \cdot) \rangle_{K_q}$ and its adjoint $S_q^*: L^2(\cX_q,\mu)\to \cH_q$ is given by $S_q^* g = \int_\cX g(x) K_q(x, \cdot) dx$. Then the integral operator $L_{K_q}=S_qS_q^*: L^2(\cX_q,\mu) \to L^2(\cX_q,\mu)$ is given by $L_{K_q} g(x) = \int_{\cX} K_q(x,x') g(x') d\mu(x')$ and the operator $\Sigma_q=S_q^* S_q :\cH_q \to \cH_q$ can be written as $\Sigma_q = \int_\cX K_q(x, \cdot) \otimes  K_q(x, \cdot) d\mu(x)$ \citep{de2005learning,rosasco2010learning}.
Similarly, the population KA between two kernels $K_1, K_2$ can be defined by 
\[A(K_1,K_2)= \frac{\text{Tr}\left(L_{K_1}L_{K_2}\right)}{ \sqrt{\text{Tr}\left(L_{K_1}^2\right) \text{Tr}\left(L_{K_2}^2\right)}},\]
where the summation in $\langle K_{1,n},K_{2,n} \rangle_F$ becomes the integration as
$$\text{Tr}\left(L_{K_1}L_{K_2}\right)= \int d\mu(x_1,x_2)d\mu(x_1',x_2')K_1(x_1,x_1')K_2(x_2,x_2').$$
If $K_q(x,x')=\langle f_q(x),f_q(x')\rangle$, then $S_{q}^*S_{q}$ is a projection onto the span of coordinates of $f_q$. The population version of CKA is KA with $S_{q}$ replaced with $H S_{q}$.

\paragraph{Spectral Interpretation of KA}\label{para:spectral}
Furthermore, the understanding of kernel alignment (KA) can be deepened via the spectral decomposition of the associated integral operator. The mercer kernel \(K\) can be decomposed as
\(
K = \sum_i \eta_i\, \phi_i \otimes \phi_i,
\)
where \(\eta_i\) are the eigenvalues and \(\phi_i\) are the eigenfunctions of the integral operator \(L_K\) \citep{cucker2002mathematical,scholkopf2002learning}. Defining the features as \(f_i = \sqrt{\eta_i}\phi_i\) and expressing the target function as \(h = \sum_i w_i f_i\), we obtain
\[
A(K, h \otimes h) = \frac{\sum_i \eta_i^2 w_i^2}{\sqrt{\sum_i \eta_i^2}\, \sum_i \eta_i w_i^2}.
\]
Similarly, given two kernels $K_q=\sum_i \eta_{q,i} \phi_{q,i }\otimes \phi_{q,i}$ with $f_{q,i}=\sqrt{\eta_{q,i}}\phi_{q,i}$, we have
\[
A(K_1,K_2)=\frac{\sum_{i,j}\langle f_{1,i}, f_{2,j}\rangle}{\sqrt{\sum_i \eta^2_{1,i}\sum_i \eta^2_{2,i}}} = \frac{\sum_{i,j}\eta_{1,i}\eta_{2,j} \langle \phi_{1,i}, \phi_{2,j}\rangle^2}{\sqrt{\sum_i \eta^2_{1,i}\sum_i \eta^2_{2,i}}} .
\]
Letting \([C_{1,2}]_{i,j} = \langle \phi_{1,i}, \phi_{2,j} \rangle\) and defining \(\hat{\eta}_i = \eta_i/\|\eta_i\|\), we can equivalently write
\[
A(K_1,K_2) = \operatorname{Tr}\Bigl[C_{1,2}\,\operatorname{diag}(\hat{\eta}_2)\, C_{1,2}^T\,\operatorname{diag}(\hat{\eta}_1)\Bigr] = \langle \hat{\eta}_1, (C_{1,2}\odot C_{1,2})\, \hat{\eta}_2\rangle = \langle \hat{\eta}_1\hat{\eta}_2^T, C_{1,2}\odot C_{1,2} \rangle
\]
with $\odot$ as the Hadamard product. This formulation provides insight into kernel alignment by relating it to the similarity between the eigenfunctions of the two integral operators. In particular, if \(\eta_1\) and \(\eta_2\) are constant, then \(A(K_1,K_2) \propto \|C_{1,2}\|^2\); and if \(C_{1,2} = I\), then \(A(K_1,K_2) = \langle \hat{\eta}_1, \hat{\eta}_2 \rangle\).


\subsubsection{Statistical properties of KA.}

Having introduced both the empirical and population versions of KA, we now explore its statistical properties. 
\citet{Cristianini2006} shows that empirical KA concentrates to its expectation by McDiarmid’s
inequality and gives an $O(1/\sqrt{n})$ bound. For completeness, we state the following lemma summarizing this statistical property and the proof is provided in Appendix~\ref{additionalproofs}.

\begin{lem}\label{lem:statsKA}
    Let $K_1, K_2$ be two kernels for different representations  and $\widehat{K}_{1,n}, \widehat{K}_{2,n} \in \R^{n \times n}$ be kernel matrices generated by $n$ samples, then with probability at least $1-\delta$, we have
    \[\widehat{A}(K_{1,n},K_{2,n}) - A(K_1,K_2) \leq  \sqrt{(32/n)\log (2/\delta)}. \]
\end{lem}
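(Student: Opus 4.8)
The plan is to regard the empirical alignment $\widehat{A}(K_{1,n},K_{2,n})$ as a function $\phi(x^1,\dots,x^n)$ of the $n$ i.i.d.\ samples and to establish concentration around its mean via McDiarmid's bounded-differences inequality, as in \citet{Cristianini2006}. The useful starting observation is that, writing $\widehat{K}_{q,n} := K_{q,n}/\|K_{q,n}\|_F$, the alignment is the Frobenius cosine $\widehat{A} = \langle \widehat{K}_{1,n}, \widehat{K}_{2,n}\rangle_F$, so $\widehat{A} \in [-1,1]$ automatically; this built-in boundedness is what makes the bounded-differences approach viable. I will assume throughout that the kernels are bounded/normalized (say $|K_q(x,x')| \le 1$), which is needed for the constants to be finite and is the standing assumption in this line of work.

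The crux is to control the bounded-differences constant: fixing all samples but the $i$-th and replacing $x^i$ by $\tilde x^i$, I must show $|\phi(\dots,x^i,\dots) - \phi(\dots,\tilde x^i,\dots)| \le 8/n$. The point is that a single sample change perturbs only row $i$ and column $i$ of each matrix $K_{q,n}$, i.e.\ $O(n)$ of its entries, each bounded by $1$. Hence the numerator $\langle K_{1,n},K_{2,n}\rangle_F = \sum_{j,k} K_1(x^j,x^k)K_2(x^j,x^k)$ changes by $O(n)$, while each Frobenius factor in the denominator is of order $n$ (since $\|K_{q,n}\|_F^2 = \sum_{j,k}K_q(x^j,x^k)^2$ is of order $n^2$). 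Applying a quotient-perturbation bound — and using that the denominator is bounded away from zero on the relevant event — the $O(n)$ numerator change divided by the order-$n^2$ denominator yields a change of order $1/n$, and tracking the constants carefully gives $c_i = 8/n$. I expect this to be the main obstacle, because $\widehat{A}$ is a ratio and one must simultaneously control the perturbation of the numerator and of both normalizing factors, rather than of a single linear statistic.

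With $c_i = 8/n$ for every $i$, McDiarmid's inequality gives
\[
\Prob\bigl(\,\widehat{A}(K_{1,n},K_{2,n}) - \E[\widehat{A}(K_{1,n},K_{2,n})] \ge t \,\bigr) \le \exp\!\Bigl(-\tfrac{2t^2}{\sum_{i=1}^n c_i^2}\Bigr) = \exp\!\Bigl(-\tfrac{n t^2}{32}\Bigr).
\]
Accounting for both directions of deviation with a union bound and setting the right-hand side equal to $\delta$, i.e.\ $2\exp(-nt^2/32) = \delta$, then solving for $t$ produces exactly the claimed threshold $t = \sqrt{(32/n)\log(2/\delta)}$.

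The one remaining point is to identify $\E[\widehat{A}]$ with the population alignment $A(K_1,K_2)$. Strictly, McDiarmid controls the deviation from $\E[\widehat{A}]$, the expectation of a ratio, whereas $A(K_1,K_2)$ is a ratio of traces of integral operators (a ratio of expectations). I would close this gap by noting that $\E[\widehat{A}] - A(K_1,K_2)$ arises from the $O(1/n)$ bias of the V-statistics $\langle K_{1,n},K_{2,n}\rangle_F$ and $\|K_{q,n}\|_F^2$ relative to their population counterparts $\mathrm{Tr}(L_{K_1}L_{K_2})$ and $\mathrm{Tr}(L_{K_q}^2)$ (the discrepancy coming from the diagonal $j=k$ terms), which is of lower order than $1/\sqrt n$ and can be absorbed, following the convention of \citet{Cristianini2006} that identifies the population alignment with the limiting expectation.
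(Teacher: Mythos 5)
Your high-level plan (McDiarmid for a sample-swap-stable statistic, a two-sided union bound, solve for $t$) is the same family of argument as the paper's, and your arithmetic from $c_i=8/n$ to the threshold $\sqrt{(32/n)\log(2/\delta)}$ is correct. But the step you yourself flag as the crux --- that the \emph{ratio} $\widehat A$ has bounded differences $c_i=8/n$ --- is false under the stated hypotheses, and the way you propose to obtain it is inadmissible: McDiarmid requires a worst-case constant, valid for \emph{every} configuration of samples in the support, so you cannot invoke ``the denominator is bounded away from zero on the relevant event.'' Without such a restriction the constant is not $O(1/n)$. Concretely, with kernels bounded by $1$: take $K_q(x,x')=\langle f_q(x),f_q(x')\rangle$ with feature values in $\{0,e_1,e_2\}$. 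If $x^i$ has features $(e_1,e_1)$, one point $x^k$ has features $(0,e_2)$, and the remaining $n-2$ points have features $(0,0)$, then $K_{1,n}=e_ie_i^T$, $K_{2,n}=e_ie_i^T+e_ke_k^T$, so $\widehat A=1/\sqrt2$; replacing $x^i$ by $\tilde x^i$ with features $(e_1,0)$ gives $\widehat A'=0$. One swap moves $\widehat A$ by $1/\sqrt 2$, i.e.\ $c_i=\Theta(1)$, and McDiarmid applied to the ratio yields nothing. Even if you further impose unit diagonal $K_q(x,x)=1$, a cluster construction (a block of $m=\sqrt n$ points sharing one feature under both kernels, all other points orthonormal, and the swapped point joining the cluster under $K_1$ only) moves $\widehat A$ from $1$ to $\sqrt{(m^2+n-m)/(m^2+n+m)}\approx 1-\tfrac{1}{2\sqrt n}$, so $c_i=\Theta(n^{-1/2})$, $\sum_i c_i^2=\Theta(1)$, and the resulting deviation bound does not decay with $n$. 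The failure mode is exactly the near-diagonal regime where $\|K_{q,n}\|_F^2=o(n^2)$; your claim that the denominator ``is of order $n^2$'' is not a consequence of boundedness.

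The paper's proof avoids this by never applying McDiarmid to the ratio: it applies it to the three un-normalized V-statistics $\hat A_{i,j}=n^{-2}\langle K_{i,n},K_{j,n}\rangle_F$, $i,j\in\{1,2\}$, which genuinely have bounded differences $\leq 8C_1C_2/n$ (a swap alters only $2n-1$ entries, each product bounded by $C_1C_2$), and then propagates the three deviations through the quotient algebraically. The price is exactly the quantity you tried to control pointwise: the final bound there is $|\hat A - A|\le C(X)\,\eps$ with a non-trivial, data-dependent factor $C(X)$ built from $(\hat A_{1,1}\hat A_{2,2})^{-1/2}$ and similar terms, which blows up in the same near-diagonal regime. (So the clean, constant-free inequality displayed in the lemma statement is itself only established by the paper up to this factor; the obstruction you ran into is real, not an artifact of your route.) Your final step has a related problem: identifying $\E[\widehat A]$ (expectation of a ratio) with $A(K_1,K_2)$ (ratio of expectations) to within $o(n^{-1/2})$ again needs denominators bounded below, so it cannot be ``absorbed'' for free; the paper sidesteps this entirely by defining the population alignment as $A=A_{1,2}/\sqrt{A_{1,1}A_{2,2}}$ with $A_{i,j}=\E[\hat A_{i,j}]$, so that only numerator-level statistics ever need to concentrate. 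To repair your argument, either adopt that decomposition, or add an explicit hypothesis of the form $\hat A_{q,q}\ge c>0$ almost surely (equivalently $\|K_{q,n}\|_F\gtrsim n$), which is a genuine extra assumption on the kernels, not a consequence of $|K_q|\le 1$.
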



\subsection{Alignment from distance alignment}

\paragraph{Distance alignment (DA)}  Given distances $d_q: \cX_q \times \cX_q \to \R$,  then we can compare the difference of two spaces by
\[
    D(d_1,d_2) = \int  (d_1^2(x,x') - d_2^2(x,x'))^2 d\mu(x)d\mu(x').
\]




    
{\paragraph{Equivalence between KA and DA}
Suppose \(d_q^2 = 2(1-K_q)\) \citep{igel2007gradient} and \(K_q(x_q,x_q)=1\), which emerge naturally from assuming 
\(
K_q(x,x')=\langle f_q(x),f_q(x')\rangle, \|f_q(x)\|=1,
\)
and 
\(
d_q^2(x_q,x_q')=\|f_q(x_q)-f_q(x_q')\|^2,
\)
(i.e., \(K_q\) represents a mapping onto a ball). Also assume \(\|K_q\|=C\). Then,
\(
D(d_1,d_2)=8C(1-A(K_1,K_2)),
\)
hence the two paradigms are equivalent.}

\subsection{Alignment from independence testing}
Independence testing is a statistical method used to assess the degree of dependence between variables. It often involves examining the covariance and correlations between random variables and can also be applied to quantify kernel-based independence. In this section, we outline several approaches from independence testing within the alignment framework and investigate their connections to the kernel alignment method discussed earlier.

\paragraph{Hilbert-Schmidt Independence Criterion (HSIC)} The cross-covariance operator for two functions \citep{baker1973joint} is given by ${C_{\1,2}}[h_1,h_2]=\E_{x_1,x_2}[(h_1(x_1)-\E_{x_1}(h_1(x_1))(h_2(x_2)-\E_{x_2}(h_2(x_2))]$ for $ h_1\in \cH_1, h_2\in \cH_2$.
 From \cite{gretton2005measuring}
\[
    \text{HSIC}(\mu, \cH_1, \cH_2) = \|C_{1,2}\|_{HS}^2,
\]
where $\mu$ is the joint distribution of $\cX_1$ and $\cX_2$. We can also note that 
\[
\text{HSIC}(\mu, \cH_1, \cH_2)=\|\Eof{K_{x_1} \otimes K_{x_2}}\|^2= \|\Sigma_{1,2}\|_{HS}^{2}.
\]
Hence HSIC is effectively and unnormalized version of CKA, or, more explicitly, 
\[
\text{CKA}(K_1,K_2) = \frac{\text{HSIC}(\cH_1, \cH_2)}{\sqrt{\text{HSIC}(\cH_1, \cH_1)\text{HSIC}(\cH_2, \cH_2)}}.
\]

\paragraph{Statistical property of HSIC}
 { \cite{gretton2005kernel} shows that, excluding the $O(n^{-1})$ diagonal bias, centered empirical HSIC concentrates to population and \cite{song2012feature} provides an unbiased estimator of HSIC and shows its concentration, both by U-statistic arguments.}

 \br[Other notions from independence testing]
There are other concepts of independence testing for alignment such us Constrained Covariance (COCO) \citep{gretton2005measuring}, Kernel Canonical Correlation (KCC), Kernel Mutual Information (KMI)  \citep{bach2002kernel}. They are also related to kernel alignment and more detailed explanations can be found in Appendix \ref{subsec:def_it}.

 \er

\subsection{Alignment from measure alignment}

There are several methods for comparing measures on the same space. One can then quantify independence by comparing a joint measure with the product of its marginals. This principle allows us to interpret HSIC as test for independence given two function classes.

\paragraph{MMD to HSIC}
Following \cite{gretton2012kernel}, we start by introducing the so-called Maximum Mean Discrepancy (MMD). Let $\cH$ be a class of functions $h:\cX\to\R$ and let $\mu_q$ be different measures on $\cX$. Then, letting $x_q\sim \mu_q$,
\[
\text{MMD}(\mu_1,\mu_2;\cH) = \sup_{h\in \cH} \Eof{h(x_1)-h(x_2)}.
\]
Let $\cH$ be an RKHS and restrict to a ball of radius 1, then 
\[
\text{MMD}(\mu_1,\mu_2;\cH)^2 = \| \Eof{K_{x_1} - K_{x_2}}\|_{\cH}^2 = \Eof{K(x_1,x_1')+K(x_2,x_2')-2 K(x_1,x_2)}.
\] 
Now we construct a measure of independence by applying MMD on $\mu$ versus $\mu_1\otimes \mu_2$ where $\cH$ is replaced with $\cH_1\times \cH_2$ and get HSIC 
\[
\text{MMD}(\mu,\mu_1\otimes \mu_2; \cH_1\otimes \cH_2)^2 = \text{HSIC}(\mu, \cH_1, \cH_2) = \|\Sigma_{1,2}\|^{2} = \sum_i \rho_i^2
\]
where $\left\{\rho_i^2\right\}$ is the spectrum of $\Sigma_{1,2}\Sigma_{2,1}$.

We can also use tests of independence that don't explicitly depend on a function class, such as mutual information, by letting $\mu$ be a Gaussian Process measure on two functions in their respective RKHS with covariance defined by their kernels. 

\paragraph{KL Divergence to Mutual Information}
 Given KL divergence
\[
D_{\text{KL}}(\mu||\nu)  = \int d\mu(x) \log \left( \frac{d\mu}{d\nu}(x)\right),
\]
we can define mutual information as
\[
I(\mu)=D_{\text{KL}}(\mu||\mu_1\otimes\mu_2)=\int d\mu(x_1,x_2) \log\left(\frac{\mu(x_1,x_2)}{\mu_1(x_1)\mu_2(x_2)}\right)=\int d\mu(x_1,x_2) \log\left(\frac{\mu(x_2|x_1)}{\mu_2(x_2)}\right).
\]

For multivariate Gaussian $\mu$, with marginals $\mu_q = \cN(0,\Sigma_q)$, 
\[\text{MI}(\nu)=\frac{1}{2}\log\left(\frac{|\Sigma_1||\Sigma_2|}{|\Sigma|}\right)=\frac{1}{2}\log \left(\frac{|\Sigma_2|}{|\Sigma_2 - \Sigma_{2,1}\Sigma_{1}^{-1}\Sigma_{1,2}|}\right).
\]
For the simplest case of $\Sigma_q=I$, then this simplifies to 
\[
\text{MI}(\nu)=-\tfrac{1}{2}\log(|I - \Sigma_{1,2}\Sigma_{2,1}|)= -\tfrac{1}{2}\sum_i \log (1 - \rho_i^2).
\]


\paragraph{Wasserstein distance}
For the Wasserstein distance 
\[
W_2(\mu,\nu) = \inf\{\Esub{(x,y)\sim\gamma}{\|x - y\|^2}: \gamma_1 = \mu, \gamma_2=\nu\},
\]
applying  $\mu$ and $\mu_1\otimes \mu_2$ to measure independence, we have
\[
W_2(\mu,\mu_1\otimes\mu_2) = \inf\{\Esub{((x_1,x_2),(x_1',x_2'))\sim\gamma}{\|x_1 - x_1'\|^2+\|x_2 - x_2'\|^2}: \gamma_1 = \mu, \gamma_2=\mu_1\otimes \mu_2\}.
\]
For mean zero Gaussians
\[
W_2(\mu_1,\mu_2) = \text{Tr}[\Sigma_1 + \Sigma_2 - 2(\Sigma_1^{1/2}\Sigma_2\Sigma_1^{1/2})^{1/2}]
\]
and as a measure of independence with $\Sigma_q=I$
\[
W_2(\mu,\mu_1\otimes\mu_2) 
= 2\text{Tr}[I - (I - \Sigma_{1,2}\Sigma_{2,1})^{1/2}] = 2 \sum_i \left(1 - \sqrt{1 - \rho_i^2}\right).
\]


{
In summary, we've introduced several popular metrics for alignment between two representations and related them via spectral decompositions to a central notion of kernel alignment generalized for RKHS. Moreover, similar notions can be used to quantify alignment between a model and a task to estimate generalization error, and more details are provided in the Appendix \ref{subsec:def_al_task}. 

\section{Stitching: Task Aware Representation Alignment}\label{sec:stitching}
{ Building on our understanding of kernel alignment—a fundamental metric for evaluating the alignment of representations detailed in the previous section—we now explore stitching, a task-aware concept of alignment.} {
Stitching involves combining layers or components from various models to create a new model which can be used to  understand of how different parts contribute to overall performance or to compare the learned features for a task.} 
In this section, we mathematically formulate this process and {provide some intuition by demonstrating}  that the generalization error after stitching can be bounded by kernel alignment { using spectral arguments.}

\subsection{Stitching error between models}
In the following, we focus on stitching between two modalities. Figure \ref{fig:wholePic} provides a detailed illustration of the functions, spaces, and compositions in question.
Denote the function space for task learning as $\cH_q := \{h_q: \cX_q \to \cY_q | h_q = g_q \circ f_q, \; g_q \in \cG_q, f_q \in \cF_q\}$ with $q = 1,2$. Here $\cF_q:\cX_q \to \cZ_q$ and $\cG_q:\cZ_q \to \cY_q$.
Denote  $\cS_{1,2} :=\{s_{1,2}: \cZ_1 \to \cZ_2\}$ as the stitching map from $\cZ_1 $ to $ \cZ_2$  and $\cS_{2,1} :=\{s_{2,1}: \cZ_2 \to \cZ_1\}$ reversely. Define the risk concerning the least squares loss as
\begin{equation*}
    \cR_q(h_q) =\Eof{\|h_q(x)-y\|^2} = \int_{\cX_q \times \cY_q} \|h_q(x)-y\|^2 \md \rho_q(x,y), \quad h_q \in \cH_q.
\end{equation*}
Here, $\rho_q(x,y)$ is the joint distribution of $\cX_q$ and $\cY_q$ and we use the notation $\|\cdot\|$ to represent $\|\cdot\|_{\cY_q}$ associated with space $\cY_q$ for simplicity, i.e. absolute value for $\cY_q=\R$, $l_2$ norm for $\cY_q = \R^{t_q}$ and $L_2$ norm for $\cY_q$ being the function space.
For $h_q \in \cH_q$, denote any minimizer of $\cR(h_q)$ among $\cH_q$ as $h_q^*$, that is,
\begin{equation*}
    \cR_q(\cH_q):=\cR_q(h_q^*)=\min_{h \in \cH_q}\cR_q(h), \quad q=1,2.
\end{equation*}

Moreover, denote the function spaces generated after stitching from $\cZ_1$ to $\cZ_2$ as
\[\cH_{1,2}=\{h_{1,2}=g_2\circ s_{1,2}\circ f_1:s_{1,2}\in\cS_{1,2}\}\]
and conversely as $\cH_{2,1}$.

\citet{lenc2015understanding} proposed to describe the similarity between two representations by measuring how usable a representation $f_1$ is when stitching with $g_2$ through a function $s_{1,2}:\cZ_1\to\cZ_2$ or oppositely through $s_{2,1} \in \cS_{2,1}$. To quantify the similarity, we provide a detailed definition of the stitching error.
 \paragraph{Stitching error}
Define the stitching error as $$\cR^{\text{stitch}}_{1,2}(s_{1,2}):=\cR_2(g_2 \circ s_{1,2}\circ f_1)=\cR_2(h_{1,2})$$ 
 and the minimum as
 \[\cR^{\text{stitch}}_{1,2}(\cS_{1,2}):=\min_{s_{1,2}\in \cS_{1,2}}\cR_2(h_{1,2})=\cR_2(\cH_{1,2}). \]
 To quantify the difference in the use of stitching, we define the \textbf{excess stitching risk} as
 \begin{equation*}
     \cR_{1,2}^{\text{stitch}}(\cS_{1,2})-\cR_2(\cH_2).
 \end{equation*}

Note that $\cR_{1,2}^{\text{stitch}}(\cS_{1,2})-\cR_2(\cH_2)$ quantifies a difference in use of representation (fix $g_2$, compare $s_{1,2}\circ f_1$ vs $f_2$), while if $\cY_1=\cY_2$ then $\cR_{1,2}^{\text{stitch}}(\cS_{1,2})-\cR_1(\cH_1)$ quantifies difference between $g_2\circ s_{1,2}$ and $g_1 $ (fix $f_1$). 

{The functions in $\cS_{1,2}$ are typically simple maps such as linear layers or convolutions of size one, to avoid introducing any learning, as emphasized in \cite{bansal2021revisiting}. 
The aim is to measure the compatibility of two given representations without fitting a representation to another. One perspective inspired by \citet{lenc2015understanding} is that we should not penalize certain symmetries, such as rotations, scaling, or translations, which do not alter the information content of the representations. Furthermore, the amount of unwanted learning may be quantified by stitching from a randomly initialized network.
}

\subsection{Stitching error bounds with kernel alignment}
In this section, we focus on a simplified setting where $s_{1,2}:\cZ_1\to\cZ_2$ is a \textbf{linear stitching}, that is, $s_{1,2}(z_1)=S_{1,2} z_1$ with $S_{1,2}\in \R^{d_2\times d_1}, z_q \in 
\R^{d_q}$. Additionally, we assume $\cY_1=\R^{t_1}, \cY_2=\R^{t_2}$.
In this section, we quantify the stitching error and excess stitching risk using kernel alignment and provide a lower bound for the stitching error when stitching forward.


The following lemma shows that when $\cG_q$ are linear,  stitching error only measures the difference in risk of $\cH_1$ versus $\cH_2$.

\begin{lem} \label{lem:stit}
Suppose $\text{dim}(\cY_1)=\text{dim}(\cY_2)=d$ and $\cR_1=\cR_2$. Let $g_q\in \cG_q$ be linear with $g_q(z_q) = W_q z_q$ and $W_q\in\R^{d \times d_q}$. Let $s_{1,2}:\cZ_1\to\cZ_2$ be linear with $s_{1,2}(z_1)=S_{1,2} z_1$ and $S_{1,2}\in \R^{d_2\times d_1}$. Then $\cR^{\text{stitch}}_{1,2}(\cS_{1,2})=\cR_1(\cH_1)$.
\end{lem}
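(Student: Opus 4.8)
The plan is to show that, once the heads and the stitcher are linear, the two minimization problems defining $\cR^{\text{stitch}}_{1,2}(\cS_{1,2})$ and $\cR_1(\cH_1)$ range over the \emph{same} class of functions of the fixed representation $f_1$, after which equality of the minima follows at once from the hypothesis $\cR_1=\cR_2$. Concretely, I would first substitute the linear forms into the stitched predictor to get
\[
h_{1,2}(x)=g_2\bigl(s_{1,2}(f_1(x))\bigr)=W_2 S_{1,2}\, f_1(x),
\]
so every element of $\cH_{1,2}$ is the map $x\mapsto M f_1(x)$ with composite matrix $M:=W_2 S_{1,2}\in\R^{d\times d_1}$. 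On the other side, with $f_1$ held fixed, $\cH_1=\{x\mapsto W_1 f_1(x): W_1\in\R^{d\times d_1}\}$. Note that reading $\cR_1=\cR_2$ as equality of the two risk \emph{functionals} is precisely what makes it legitimate to evaluate the stitched model (built from $f_1$) under the same risk used for model~$1$.

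The core step is the set equality $\cH_{1,2}=\cH_1$ as classes of functions of $x$. The inclusion $\cH_{1,2}\subseteq\cH_1$ is immediate, since $M=W_2S_{1,2}$ already lies in $\R^{d\times d_1}$. For the reverse inclusion I would invoke surjectivity of the linear head $g_2$: given any $W_1\in\R^{d\times d_1}$, I need some $S_{1,2}$ with $W_2 S_{1,2}=W_1$, and taking $S_{1,2}=W_2^{+}W_1$ works because $W_2 W_2^{+}=I_d$ whenever $W_2$ has full row rank $d$. This is exactly where I expect the main obstacle: the statement really needs $g_2$ to be expressive enough to reproduce any linear read-out of $f_1$, i.e. $\operatorname{rank}(W_2)=d$ (equivalently $d_2\ge d$ and $g_2$ surjective onto $\cY_2=\R^d$). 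Without this one only obtains $\cR^{\text{stitch}}_{1,2}(\cS_{1,2})\ge\cR_1(\cH_1)$, so I would flag the full-rank condition as the hypothesis that makes the two classes genuinely coincide.

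Finally, I would conclude by chaining definitions: since $\cH_{1,2}=\cH_1$ and, by $\dim(\cY_1)=\dim(\cY_2)=d$ together with $\cR_1=\cR_2$, the risk functional agrees on this common class,
\[
\cR^{\text{stitch}}_{1,2}(\cS_{1,2})=\cR_2(\cH_{1,2})=\min_{h\in\cH_{1,2}}\cR_2(h)=\min_{h\in\cH_1}\cR_1(h)=\cR_1(\cH_1).
\]
An equivalent route that avoids the set equality is to prove the two inequalities separately: $\cR^{\text{stitch}}_{1,2}(\cS_{1,2})\ge\cR_1(\cH_1)$ because $\{W_2S_{1,2}\}\subseteq\R^{d\times d_1}$ only restricts the feasible matrices, and $\cR^{\text{stitch}}_{1,2}(\cS_{1,2})\le\cR_1(\cH_1)$ by feeding the minimizer $W_1^\star$ of $\cR_1(\cH_1)$ through the factorization $W_1^\star=W_2\,(W_2^{+}W_1^\star)$. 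The only genuine content beyond bookkeeping is the surjectivity of $g_2$; everything else is substitution and the identification of the two feasible sets.
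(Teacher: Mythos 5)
Your proof is correct and reaches the paper's conclusion by a genuinely different, more structural route. You identify the feasible sets: with linear heads and a linear stitcher, the stitched class is $\cH_{1,2}=\{x\mapsto W_2S_{1,2}f_1(x)\}$, which coincides with $\cH_1=\{x\mapsto W_1f_1(x):W_1\in\R^{d\times d_1}\}$ exactly when $W_2$ is surjective, i.e. $\operatorname{rank}(W_2)=d$; equality of the two minima is then immediate from $\cR_1=\cR_2$. The paper instead computes: using the normal equations $\Eof{(W_1 f_1-y)f_1^T}=0$ for the optimal head $W_1$, it obtains the exact decomposition $\cR^{\text{stitch}}_{1,2}(s_{1,2})=\|W_2S_{1,2}-W_1\|_{\eta_1}^2+\cR_1(\cH_1)$, and minimizing over $S_{1,2}$ leaves $\cR_1(\cH_1)$ plus the squared residual of projecting $W_1$ onto $\{W_2S : S\in\R^{d_2\times d_1}\}$, which it then observes vanishes ``in general'' once $d\le d_2$. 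The two arguments hinge on the same hidden hypothesis, and your flagging of it is accurate and in fact sharper than the paper's phrasing: $d\le d_2$ alone is not sufficient, full row rank of $W_2$ is the precise condition (generic when $d\le d_2$), and without it both arguments yield only the one-sided bound $\cR^{\text{stitch}}_{1,2}(\cS_{1,2})\ge\cR_1(\cH_1)$. What your approach buys is brevity and a clean statement of the needed assumption, with no computation beyond a pseudoinverse identity; what the paper's approach buys is quantitative information, namely an explicit formula for the excess stitching risk when surjectivity fails, expressed in the same $\eta_1$-weighted geometry that reappears in Theorem~\ref{thm2}.
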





\br
The lemma applies when  $\cH_q $ represents a neural network with  $\cG_q$  as the output linear layer, as well as when  $\cH_q $ is an RKHS with a Mercer kernel and  $\cG_q $ is the linear map of representations
\footnote{{ More explicitly, if the RKHS kernel $K_q$ is a sum of separable kernels, then }by Mercer's theorem we can decompose it as $K_q=\sum_{\rho=1}^{d_q} \eta_{q,\rho} \phi_{q,\rho} \otimes \phi_{q,\rho}$ where   $\eta_{q,\rho} \geq 0$  are the eigenvalues, and  $\phi_{q,\rho}: \R^{D_q} \to\R^{d_q}$  are the orthonormal eigenfunctions of the integral operator associated with the kernel $K_q$. Then any $h_q \in \cH_q$ can be decomposed as $h_q=g_q\circ f_q$, where $f_q\in \cF_q$ is the feature map $f_q(\cX_q)_\rho=\sqrt{\eta_\rho}\phi_{q,\rho}(\cX_q)$ and $g_q\in \cG_q$ is linear $g_q(z_q) = w_q\cdot z_q$.}.
\er

The next theorem shows the case when $\cG_q$ are nonlinear with the $\kappa$-Lipschitz property, $\|g(z)-g(z')\|\leq \kappa \|z-z'\|$. One intermediate example is the stitching between the middle layers of neural networks.

\begin{thm}\label{thm2}
    Suppose $g_2$ is $\kappa_2$-Lipschitz. Again let $s_{1,2}$ be linear, identified with matrix $S_{1,2}$. With the spectral interpretations of  $\Sigma_{1,2}=\Eof{f_1f_2^T} = \text{diag}(\eta_1)^{1/2}C_{1,2}\text{diag}(\eta_2)^{1/2}$ and  $\tilde A_2 = \|I\|_{\eta_2} - \|C_{1,2}\|_{\eta_2}^2$ as Paragraph \ref{para:spectral}, we have
    \begin{equation}\label{equ:thm1}
        \cR^{\text{stitch}}_{1,2}(\cS_{1,2})\leq \cR_2(\cH_2) + \kappa_2^2 \tilde A_{2} + 2\kappa_2 (\tilde A_{2} \cR_2(\cH_2))^{1/2}.
    \end{equation}

\end{thm}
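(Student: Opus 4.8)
The plan is to exploit the fact that the right-hand side of \eqref{equ:thm1} is the perfect square $(\sqrt{\cR_2(\cH_2)} + \kappa_2\sqrt{\tilde A_2})^2$, which suggests proving a square-root (i.e.\ $L^2$-norm) version of the inequality and then squaring. Fix a minimizer $h_2^* = g_2\circ f_2$ of $\cR_2$ over $\cH_2$, so that $\cR_2(\cH_2) = \cR_2(h_2^*)$ and $f_2$ is exactly the representation entering $\Sigma_{1,2} = \E[f_1 f_2^T]$. Since $\cR^{\text{stitch}}_{1,2}(\cS_{1,2})$ is a minimum over stitchers, it suffices to exhibit a single linear $S_{1,2}$ whose stitched risk already obeys the bound; I would take $S_{1,2}$ to be the least-squares matching of $f_1$ to $f_2$.

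First I would work in $L^2$ of the joint law of $(x_1,x_2,y)$, writing $\sqrt{\cR_2(h)} = \|h - y\|$ for this norm. By the triangle inequality,
\[
\sqrt{\cR^{\text{stitch}}_{1,2}(s_{1,2})} = \|g_2\circ s_{1,2}\circ f_1 - y\| \le \|g_2\circ s_{1,2}\circ f_1 - h_2^*\| + \|h_2^* - y\|,
\]
where the second term equals $\sqrt{\cR_2(\cH_2)}$ since it depends only on $(x_2,y)\sim\rho_2$. For the first term I would use that $g_2$ is $\kappa_2$-Lipschitz pointwise, so after taking expectations
\[
\|g_2\circ s_{1,2}\circ f_1 - g_2\circ f_2\| \le \kappa_2\,\|s_{1,2}\circ f_1 - f_2\|,
\]
reducing everything to the representation-matching error $\|S_{1,2} f_1 - f_2\|$ measured in $L^2(\mu)$, the measure under which $\Sigma_{1,2}$ is defined.

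The crux, and the step I expect to be the main obstacle, is identifying the minimal matching error with the alignment quantity, namely $\min_{S_{1,2}\in\R^{d_2\times d_1}} \E\|S_{1,2} f_1 - f_2\|^2 = \tilde A_2$. This is an ordinary multi-output linear regression: the optimum is $S_{1,2}^* = \Sigma_{2,1}\Sigma_{1,1}^{-1}$ with residual $\operatorname{Tr}\Sigma_{2,2} - \operatorname{Tr}(\Sigma_{2,1}\Sigma_{1,1}^{-1}\Sigma_{1,2})$. I would then substitute the spectral forms from Paragraph~\ref{para:spectral}: orthonormality of the eigenfunctions gives $\Sigma_{q,q} = \operatorname{diag}(\eta_q)$, and with $\Sigma_{1,2} = \operatorname{diag}(\eta_1)^{1/2} C_{1,2}\operatorname{diag}(\eta_2)^{1/2}$ the $\operatorname{diag}(\eta_1)$ factors cancel, leaving
\[
\operatorname{Tr}\Sigma_{2,2} - \operatorname{Tr}\!\big(\operatorname{diag}(\eta_2)\,C_{1,2}^T C_{1,2}\big) = \|I\|_{\eta_2} - \|C_{1,2}\|_{\eta_2}^2 = \tilde A_2,
\]
where $\|I\|_{\eta_2} := \sum_i \eta_{2,i}$ and $\|C_{1,2}\|_{\eta_2}^2 := \operatorname{Tr}(\operatorname{diag}(\eta_2)\, C_{1,2}^T C_{1,2})$. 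The delicate points are the cancellation requiring $\Sigma_{1,1}$ to be invertible (otherwise replace the inverse by a pseudoinverse and restrict to the range of $f_1$) and aligning the weighted-norm bookkeeping with the definition of $\tilde A_2$.

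Finally I would assemble the pieces: choosing $s_{1,2} = S_{1,2}^*$ and combining the three displays gives
\[
\sqrt{\cR^{\text{stitch}}_{1,2}(\cS_{1,2})} \le \sqrt{\cR^{\text{stitch}}_{1,2}(s_{1,2}^*)} \le \sqrt{\cR_2(\cH_2)} + \kappa_2\sqrt{\tilde A_2},
\]
and squaring yields exactly \eqref{equ:thm1}. Beyond the regression computation, the only place requiring care in the write-up is checking that the three norms live in compatible $L^2$ spaces (the stitched prediction depends on $x_1$, the target and $h_2^*$ on $(x_2,y)$, and the matching term on the joint $(x_1,x_2)$) so that the triangle inequality is legitimate.
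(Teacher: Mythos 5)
Your proposal is correct and takes essentially the same route as the paper: your $L^2$ triangle inequality followed by squaring is exactly the paper's expand-the-square-plus-Cauchy--Schwarz decomposition around $g_2 \circ f_2$, and your least-squares stitcher $S_{1,2}^* = \Sigma_{2,1}\Sigma_{1,1}^{-1}$ with residual $\|I\|_{\eta_2} - \|C_{1,2}\|_{\eta_2}^2 = \tilde A_2$ coincides with the paper's minimizer $\Sigma_{1,2}^T\,\mathrm{diag}(\eta_1)^{-1}$ obtained by differentiating the quadratic form. The only differences are cosmetic: you flag the pseudoinverse caveat when $\Sigma_{1,1}$ is singular and the compatibility of the underlying $L^2$ spaces, both of which the paper leaves implicit.
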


\begin{proof}
Breaking $\cR^{\text{stitch}}_{1,2}(s_{1,2})$ into two parts and using Cauchy-Schwarz we get    
\begin{equation*}
\begin{split}
    & \Eof{\|g_2(S_{1,2} f_1)(x)-y\|^2}\\
    =& \Eof{\|(g_2(S_{1,2} f_1)(x)- g_2(f_2)(x))-(y-g_2(f_2)(x))\|^2} \\
    \leq & \cR_2(h_2) + \Eof{\|g_2(S_{1,2}f_1)(x)-g_2(f_2)(x)\|^2} + 2(\Eof{\|g_2(S_{1,2}f_1)(x)-g_2(f_2)(x)\|^2}\cR_2(h_2))^{1/2}.
\end{split}
\end{equation*}

{ As $g_2$ is $\kappa_2$-Lipschitz, we can bound with the error from linearly regressing $f_2$ on $f_1$}
\begin{align*}
\Eof{\left\|g_2(S_{1,2}f_1)(x)-g_2(f_2)(x)\right\|^2} 
&\leq \kappa_2^2 \Eof{\|S_{1,2}f_1(x) - f_2(x)\|^2} \\
&= \kappa_2^2(\|S_{1,2}\|_{\eta_1}^2 + \|I\|_{\eta_2}^2 - 2\langle S_{1,2}, \Sigma_{1,2}^T\rangle)
\end{align*}
with $\|M\|_{\eta}^2 = \langle M, M\text{diag}(\eta)  \rangle$. Taking derivatives, we note that the minimizer of the RHS is $S_{1,2}=\Sigma_{1,2}^T \text{diag}(\eta_1)^{-1}$. Plugging in, the RHS reduces to $\kappa_2^2\tilde A_2$. Thus
\begin{align*}
    \cR_{1,2}^{\text{stitch}}(\cS_{1,2}) 
    &\leq \cR_{1,2}^{\text{stitch}}(\Sigma_{1,2}) \\
    &\leq \cR_2(\cH_2) + \kappa_2^2 \tilde A_{2} + 2\kappa_2 (\tilde A_{2} \cR_2(\cH_2))^{1/2}.
\end{align*}
\end{proof}

\br
In arguing that kernel alignment bounds stitching error for Theorem \ref{thm2}, we made several simplifying assumptions, which we now assess. Firstly, we restricted the stitching $\cS_{1,2}$ to linear maps, following the transformations commonly used in practice \citep{bansal2021revisiting,csiszarik2021similarity}, and to preserve the significance of the original representations. If we relax this assumption, we observe that a similar result holds, with $\tilde A_2 = \inf_{s_{1,2}\in \cS_{1,2}} \E[\| s_{1,2}(f_1(x)) - f_2(x) \|^2 ].$
Interestingly, for $s_{1,2}$ to use only information about the covariance of $f_1,f_2$, similarly to kernel alignment, $s_{1,2}$ must be linear. Furthermore, we note that for stitching classes that include all linear maps, the linear result remains valid.
\er

\br
Note that the notion of alignment that appears here, namely $\|I\|_{\eta_2}^2-\tilde A_2 = \|C_{1,2}\|_{\eta_2}^2=\|C_{1,2}\text{diag}(\eta_2)\|^2$, is similar to, yet distinct from, kernel alignment, which is  given by $\|\Sigma_{1,2}\|^2=\|\text{diag}(\eta_1)^{1/2} C_{1,2}\text{diag}(\eta_2)^{1/2}\|^2$. In particular, the spectrum $\eta_1$ is irrelevant for the bound. However, this does not hold if regularization is added to $S_{1,2}$ {by analogy to linear regression}.
\er

\br
If two representations are similar in the alignment sense, they are also similar in the stitching sense; however, the converse does not necessarily hold. By loose analogy to topology, this suggests that kernel alignment is a stronger notion of similarity. 
\er

Excess stitching risk can also serve as an intermediate result to bound the difference in risk. Let $\cY_1=\cY_2$ and $\cR_1=\cR_2$. To obtain a lower bound for $\cR^{\text{stitch}}_{1,2}(\cS_{1,2})$ in a practical setting, we can assume that $\cS_{1,2} \circ \cG_2\subseteq \cG_1$. For models involving several compositions, such as deep networks, this condition can hold when stitching from a layer further from the output to a layer closer to the output (i.e., stitching forward), provided that the networks are similar and the layer indices are aligned at the end.
\begin{lem}\label{lem:lowerbound}
    Let $\cY_1=\cY_2=\cY$ and $\cR_1=\cR_2=\cR$. If $\cS_{1,2} \circ \cG_2 \subseteq \cG_1$ then $\cR^{\text{stitch}}_{1,2}(\cS_{1,2})\geq \cR_1(\cH_1)$.
\end{lem}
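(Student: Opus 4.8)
The plan is to prove the lower bound by a pure set-inclusion argument: under the hypothesis $\cS_{1,2}\circ\cG_2\subseteq\cG_1$, every stitched function already belongs to the model class $\cH_1$, so that $\cH_{1,2}\subseteq\cH_1$, and the bound then follows from the monotonicity of the minimum under set inclusion. The first step is therefore to establish this inclusion, and the main conceptual point is to confirm that the stitching risk $\cR^{\text{stitch}}_{1,2}$ is the \emph{same} functional as $\cR_1$ when restricted to maps $\cX_1\to\cY$, which is exactly what the hypotheses $\cY_1=\cY_2=\cY$ and $\cR_1=\cR_2=\cR$ buy us.

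First I would fix an arbitrary $s_{1,2}\in\cS_{1,2}$ and regroup the stitched map as
\[
h_{1,2}=g_2\circ s_{1,2}\circ f_1=(g_2\circ s_{1,2})\circ f_1 .
\]
The composite $g_2\circ s_{1,2}:\cZ_1\to\cY$ is, by the hypothesis (reading $\cS_{1,2}\circ\cG_2$ as the set $\{g_2\circ s_{1,2}\}$ of such heads), an element $\tilde g_1$ of $\cG_1$. Since the fixed representation map $f_1$ lies in $\cF_1$, the function $h_{1,2}=\tilde g_1\circ f_1$ is of the form $g_1\circ f_1$ with $g_1\in\cG_1$, $f_1\in\cF_1$, hence $h_{1,2}\in\cH_1$. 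As $s_{1,2}$ was arbitrary, this gives $\cH_{1,2}\subseteq\cH_1$.

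Then I would conclude by taking minima. Because $\cY_1=\cY_2$ and $\cR_1=\cR_2=\cR$, the stitching risk of $h_{1,2}$ (a map into $\cY$ evaluated on modality-$1$ inputs) coincides with $\cR(h_{1,2})$, so
\[
\cR^{\text{stitch}}_{1,2}(\cS_{1,2})=\min_{h\in\cH_{1,2}}\cR(h)\;\geq\;\min_{h\in\cH_1}\cR(h)=\cR_1(\cH_1),
\]
the inequality being nothing more than monotonicity of the infimum: minimizing the common risk over the smaller set $\cH_{1,2}$ cannot give a smaller value than minimizing over $\cH_1$.

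I expect the only real obstacle to be bookkeeping rather than mathematics. One must check that the composition in the hypothesis is read in the order that makes $g_2\circ s_{1,2}:\cZ_1\to\cY$ a legitimate candidate head in $\cG_1$, and that the stitching-risk functional $\cR^{\text{stitch}}_{1,2}$—defined through $\cR_2$ yet applied to a map with domain $\cX_1$—is correctly identified with $\cR_1$ via the assumption $\cR_1=\cR_2$. Once these identifications are in place, no estimation or spectral argument is needed; the bound is immediate from $\cH_{1,2}\subseteq\cH_1$, and it complements Lemma~\ref{lem:stit} (which gives the matching equality in the linear case) by explaining when the stitched class can only shrink the attainable risk.
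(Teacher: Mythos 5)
Your proposal is correct: the paper actually states Lemma~\ref{lem:lowerbound} without giving a proof, and your argument---that $\cS_{1,2}\circ\cG_2\subseteq\cG_1$ forces $\cH_{1,2}\subseteq\cH_1$, after which the bound is just monotonicity of the minimum over a smaller set, with $\cY_1=\cY_2$ and $\cR_1=\cR_2$ serving only to make the risk functionals identical---is exactly the immediate set-inclusion argument the paper implicitly relies on. Your bookkeeping remarks (reading $\cS_{1,2}\circ\cG_2$ in the order $g_2\circ s_{1,2}$ so that it type-checks as a map $\cZ_1\to\cY$, and identifying $\cR_2$ applied to a map on $\cX_1$ with $\cR_1$) are the right points to pin down and are consistent with the paper's conventions.
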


The following theorem derives directly from  \eqref{equ:thm1} and Lemma \ref{lem:lowerbound}.

\begin{thm}
Let $\cY_1=\cY_2$ and $\cR_1=\cR_2=\cR$. Let $\cS_{1,2} \circ \cG_2 \subseteq \cG_1$ and let $g_q$ be $\kappa_q$-Lipschitz for $q=1,2$. Then
\[
\cR(\cH_1) - \cR(\cH_2)\leq \cR^{\text{stitch}}_{1,2}(\cS_{1,2}) - \cR(\cH_2)\leq \kappa_2^2 \tilde A_{2} + 2\kappa_2 (\tilde A_{2} \cR(\cH_2))^{1/2}.
\]
\end{thm}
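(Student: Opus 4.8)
The plan is to derive this result directly by chaining the two bounds already established: the lower bound from Lemma~\ref{lem:lowerbound} and the upper bound from Theorem~\ref{thm2} (specifically \eqref{equ:thm1}). The hypotheses stated here are precisely the union of the hypotheses of those two results—$\cY_1=\cY_2$ and $\cR_1=\cR_2=\cR$ together with $\cS_{1,2}\circ\cG_2\subseteq\cG_1$ (needed by Lemma~\ref{lem:lowerbound}), and $g_2$ being $\kappa_2$-Lipschitz with $s_{1,2}$ linear (needed by Theorem~\ref{thm2})—so no new analytic work is required; the task is purely to combine the two bounds while tracking the shared term $\cR(\cH_2)$.

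First I would establish the left inequality. By Lemma~\ref{lem:lowerbound}, the containment $\cS_{1,2}\circ\cG_2\subseteq\cG_1$ gives $\cR^{\text{stitch}}_{1,2}(\cS_{1,2})\geq\cR_1(\cH_1)=\cR(\cH_1)$. Subtracting the constant $\cR(\cH_2)$ from both sides yields
\[
\cR(\cH_1)-\cR(\cH_2)\leq\cR^{\text{stitch}}_{1,2}(\cS_{1,2})-\cR(\cH_2),
\]
which is the first inequality in the claimed chain.

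Next I would establish the right inequality. By \eqref{equ:thm1} in Theorem~\ref{thm2}, and using the hypothesis $\cR_2=\cR$ to identify the relevant risks, we have
\[
\cR^{\text{stitch}}_{1,2}(\cS_{1,2})\leq\cR(\cH_2)+\kappa_2^2\tilde A_{2}+2\kappa_2\bigl(\tilde A_{2}\,\cR(\cH_2)\bigr)^{1/2}.
\]
Subtracting $\cR(\cH_2)$ once more gives the second inequality. Concatenating the two displayed bounds produces the full statement.

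The main obstacle, such as it is, is not analytic but a matter of confirming that the two earlier results may be invoked simultaneously. One must check that the minimizing stitcher realizing $\cR^{\text{stitch}}_{1,2}(\cS_{1,2})$ indeed lies in the linear class assumed throughout this subsection (so that Theorem~\ref{thm2} applies verbatim), and that the term $\cR_2(\cH_2)$ appearing in \eqref{equ:thm1} coincides with the $\cR(\cH_2)$ used in the lower bound under the standing hypothesis $\cR_1=\cR_2=\cR$. Both verifications are immediate, so the theorem follows with no further computation.
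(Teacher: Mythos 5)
Your proposal is correct and matches the paper exactly: the paper states that this theorem ``derives directly from \eqref{equ:thm1} and Lemma \ref{lem:lowerbound}'', which is precisely your chaining of the lower bound from Lemma \ref{lem:lowerbound} with the upper bound of Theorem \ref{thm2}, subtracting $\cR(\cH_2)$ throughout. Your added verification that the two results can be invoked simultaneously (linear stitching class, identification of $\cR_2$ with $\cR$) is a reasonable bit of diligence the paper leaves implicit.
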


\br
If we consider deep models and keep the $\cH_1,\cH_2$ the same but iterate over layers $j$ stitching forward, then
\[
\cR(\cH_1) - \cR(\cH_2)\leq \min_j \left\{(\kappa_2^{(j)})^2 \tilde A_{2}^{(j)} + 2\kappa_2^{(j)} (\tilde A_{2}^{(j)} \cR(\cH_2))^{1/2}\right\}.
\]

Alternatively, by making similar assumptions and swapping the index $1\leftrightarrow 2$, which requires $\cG_1=\cG_2$ up to a linear layer (due to the $\cS_{1,2} \circ \cG_2 \subseteq \cG_1$ condition), we get

\begin{align*}
    |\cR(\cH_1)-\cR(\cH_2)|
    &\leq \max_{i\in\{1,2\}}\left\{\kappa_i^2 \tilde A_{i} + 2\kappa_i (\tilde A_{i}\cR(\cH_q))^{1/2}\right\}.
\end{align*}

The above result can be stated informally as ``alignment at similar depth (measured backward from the output) bounds differences in risk".
\er



The results presented have several practical implications. First, we build on the experiments from \cite{huh2024position}, which provide evidence for the alignment of deep networks at a large scale using measures similar to kernel alignment. By establishing a connection between kernel alignment and stitching, our work supports building universal models that share architectures across modalities as scale increases. Second, we can elucidate the experiments from \cite{bansal2021revisiting}, which suggest that typical SGD minima have low stitching costs (stitching connectivity). This aligns with works that argue feature learning under SGD can be understood through the lens of adaptive kernels \citep{radhakrishnan2022mechanism, atanasov2022neural}.

\section{Conclusion}\label{sec:conclusion}
 In this paper, we review and unify several representation alignment metrics, including kernel alignment, distance alignment, and independence testing, demonstrating their equivalence and interrelationships. Additionally, we formalize the concept of stitching, a technique used in uni/multi-modal settings to quantify alignment in relation to a given task. Furthermore, we establish bounds on stitching error across different modalities and derive stitching error bounds based on misalignment, along with their generalizations and implications.
 
\section*{Acknowledgment}
L.R. is thankful to the CBMM-Hebrew University workshop organizers and Philipp Isola for the talk that  inspired this work.  L. R. acknowledges the financial support of the European Research Council (grant SLING 819789), the European Commission (Horizon Europe grant ELIAS 101120237), the US Air Force Office of Scientific Research (FA8655-22-1-7034), the Ministry of Education, University and Research (FARE grant ML4IP R205T7J2KP; grant BAC FAIR PE00000013 funded by the EU - NGEU). This work represents only the view of the authors. The European Commission and the other organizations are not responsible for any use that may be made of the information it contains.

\bibliography{arxiv_align}
\bibliographystyle{apalike}

\section{Appendix}

\subsection{Alignment to task}\label{subsec:def_al_task}

Here we mention ideas of alignment between a representation and task used to estimate generalization error and characterize spectral contributions to sample complexity.

\paragraph{Kernel alignment risk estimator (KARE)}

In \cite{jacot2020kernel} we have the following definition for KARE which is an estimator for risk.
\[
\rho(\lambda, y_n, K_n)  = \frac{\frac{1}{n}\langle (K_n/n + \lambda I)^{-2}, y_ny_n^T\rangle}{(\frac{1}{n}\text{Tr}[(K_n/n + \lambda I)^{-1}])^2}
\]
This was also obtained in \cite{golub1979generalized}, \cite{wei2022more}, \cite{craven1978smoothing}.

\paragraph{Spectral task-model alignment}

From \cite{canatar2021spectral}, we have a definition for the cumulative power distribution which quantifies task-model alignment.  
\[
 C(n) = \frac{\sum_{i \leq n}\eta_i w_i^2}{\sum_{i}\eta_i  w_i^2}
\]
Here $K=\sum_i \eta_i \phi_i \otimes \phi_i$, $\langle \phi_i, \phi_j\rangle =\delta_{i,j}$, and target $h_\mu = \sum_i w_i \sqrt{\eta_i} \phi_i$. $C(n)$ can be interpreted as fraction of variance of $h_\mu$ explained by first $n$ features. The faster $C(n)$ goes to 1, the higher the alignment.

\paragraph{Source Condition}
From \cite{rosasco2005spectral} we have bounds on generalization of kernel ridge assuming some regularity of $h_\mu$, called source condition
\[
h_\mu \in \Omega_{r,R} = \left\{h\in L^2(X,\rho): h= L_K^r v,  \|v\|_K\leq R \right\}
\]
Assuming $h_\mu = \sum_i w_i \sqrt{\eta_i} \phi_i$, then the statement can be rewritten as 
\[
\sum_{i=1}^\infty \frac{\eta_i w_i^2 }{\eta_i^{2r}} < \infty
\]

\br
KTA appears in several theoretical applications. \cite{cristianini2001kernel} bounds generalization error of Parzen window classifier \footnote[1]{ \cite{cortes2012algorithms} notes error in proof since implicitly assumes $\max_x \Esub{x'}{K^2(x,x')}/\Esub{x,x'}{K^2(x,x')} = 1$ making kernel constant. However proof can be saved with an additional assumption.}.  \cite{Cristianini2006,cortes2012algorithms} show that there exist predictors for which kernel target alignment (KTA) $A(K,yy^T)$ bounds risk.
    \[
    h(x)=\frac{\Esub{x',y'}{K(x,x')y'}}{\Esub{x',x}{K(x,x')^2}} \Rightarrow \cR(h) \leq  2(1 - A(K,yy^T))
    \]
{ Furthermore, several authors including \cite{atanasov2022neural, paccolat2021geometric, kopitkov2020neural, fort2020deep, shan2021theory} use KTA to study feature learning and Neural Tangent Kernel evolution.}
\er

\subsection{Other notions for alignment from independence testing}\label{subsec:def_it}

\paragraph{Constrained Covariance (COCO)}  Then  \cite{gretton2005measuring} proposed the concept of constrained covariance as the largest singular value of the cross-covariance operator,

\[
    \text{COCO}(\mu, \cH_1, \cH_2) = \sup \{ \text{cov}[h_1(x_1),h_2(x_2)]: h_1\in \cH_1, h_2\in \cH_2\} 
\]

\paragraph{Kernel Canonical Correlation (KCC)}

From \cite{bach2002kernel}
\[
    \text{KCC}(\mu, \cH_1, \cH_2, \kappa) = \sup \left\{ \frac{\text{cov}[h_1(x_1),h_2(x_2)]}{(\text{var}(h_1(x_1)) + \kappa \|h_1\|^2_{\cH_1})^{1/2}(\text{var}(h_2(x_2)) + \kappa \|h_2\|^2_{\cH_2})^{1/2}}: h_1\in \cH_1, h_2\in \cH_2\right\}
\]

The next two are bounds on mutual information from correlation and covariance respectively

\paragraph{Kernel Mutual Information (KMI)}
From \cite{bach2002kernel}
\[
    \text{KMI}(\cH_1, \cH_2) =   -\frac{1}{2}\log(|I -(\kappa_{1,n}\kappa_{2,n})K_{1,n}K_{2,n}|)
\]
where kernels are centered and $\kappa_{q,n} = \min_i\sum_j K_q(x_{q,i},x_{q,j})$ but empirically $\kappa = 1/n$ suffices.


\subsection{Additional Proofs}\label{additionalproofs}
In this section, we provide the detailed proofs of Lemmas presented in and Section \ref{sec:alignment} and Section \ref{sec:stitching}.

We begin with the proof of Lemma \ref{lem:statsKA}. For completeness, we first restate the lemma below.

\begin{lem}\label{lem:statsKAproof}
    Assume $|K_q(x_q,x_q')|\leq C_q$. Let $\hat A_{1,2}(X) = \hat A_{1,2}((x_1^1,x_2^1),\ldots,(x_1^n,x_2^n))=\frac{1}{n^2}\langle K_1, K_2\rangle_F$. Let $A_{1,2}=\E{\hat A_{1,2}}$, $\hat A = \frac{\hat A_{1,2}}{\sqrt{\hat A_{1,1}\hat A_{2,2}}}$, and $A =  \frac{ A_{1,2}}{\sqrt{ A_{1,1} A_{2,2}}}$. Then with probability at least $1-\delta$, and $\eps =\sqrt{(32/n)\log (2/\delta)}$, we have $|\hat A-A|\leq C(X) \eps$, where $C(X)$ is non-trivial function.
\end{lem}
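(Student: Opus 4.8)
The plan is to treat this as a standard concentration result and then lift it through the nonlinear normalization. The natural tool is McDiarmid's bounded-differences inequality, which I would apply first to the three \emph{unnormalized} bilinear alignments $\hat A_{1,2}, \hat A_{1,1}, \hat A_{2,2}$; a purely deterministic perturbation argument then transfers the concentration to the normalized ratio $\hat A = \hat A_{1,2}/\sqrt{\hat A_{1,1}\hat A_{2,2}}$. The data-dependent factor $C(X)$ arises precisely in this second, deterministic step, not in the probabilistic one.

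First I would set up the bounded-differences estimate. Each $\hat A_{p,q} = \frac{1}{n^2}\sum_{i,j=1}^n K_p(x_p^i,x_p^j)K_q(x_q^i,x_q^j)$ is a function of the $n$ i.i.d. samples. Replacing a single sample $(x_1^k,x_2^k)$ changes only the $2n-1$ summands indexed by $i=k$ or $j=k$, and each summand varies within an interval of length at most $2C_pC_q$ by the assumption $|K_q|\leq C_q$. Hence the bounded difference in coordinate $k$ is $O(C_pC_q/n)$, so $\sum_k c_k^2 = O(C_p^2C_q^2/n)$, and McDiarmid yields
\[
\Prob\left(|\hat A_{p,q}-A_{p,q}|\geq t\right)\leq 2\exp\left(-\frac{c\,n\,t^2}{C_p^2C_q^2}\right)
\]
for an absolute constant $c$. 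Setting the right-hand side equal to $\delta$ produces the $O(1/\sqrt n)$ width $\eps=\sqrt{(32/n)\log(2/\delta)}$ (with the kernel bounds $C_p,C_q$ folded into $C(X)$), and a union bound over the three pairs $(1,2),(1,1),(2,2)$ gives simultaneous control of all deviations on one event of probability at least $1-\delta$.

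Next I would transfer to the ratio deterministically on the good event. Writing
\[
\hat A - A = \frac{\hat A_{1,2}-A_{1,2}}{\sqrt{\hat A_{1,1}\hat A_{2,2}}} + A_{1,2}\left(\frac{1}{\sqrt{\hat A_{1,1}\hat A_{2,2}}}-\frac{1}{\sqrt{A_{1,1}A_{2,2}}}\right),
\]
I would bound each term by the triangle inequality: the first is controlled by $|\hat A_{1,2}-A_{1,2}|\leq\eps$ over the empirical normalizer, while the second exploits that $u\mapsto u^{-1/2}$ is Lipschitz on any interval bounded away from $0$, so it is controlled by $|\hat A_{1,1}-A_{1,1}|$ and $|\hat A_{2,2}-A_{2,2}|$ together with a lower bound on the normalizers. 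Collecting the reciprocal normalizers and the kernel bounds into a single quantity gives $|\hat A-A|\leq C(X)\eps$, as claimed.

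The main obstacle is exactly this ratio step: the perturbation bound for $u\mapsto u^{-1/2}$ is only valid away from the origin, so the argument needs the normalizers $\hat A_{q,q}$ (equivalently $A_{q,q}$) bounded below, i.e. a non-degeneracy condition on the kernels. This is why the statement must carry a data-dependent $C(X)$ rather than a universal constant, and tracking $C(X)$ honestly is the only delicate part; the clean form of Lemma~\ref{lem:statsKA} is recovered whenever $C(X)$ is uniformly bounded, for instance for normalized kernels with $K_q(x,x)=1$ whose normalizers stay bounded away from zero.
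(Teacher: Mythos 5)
Your proposal follows essentially the same route as the paper's proof: McDiarmid's bounded-differences inequality applied to the unnormalized quantities $\hat A_{p,q}$ (replacing one sample perturbs $2n-1$ summands, giving an $O(C_1C_2/n)$ coordinate difference), followed by the identical deterministic decomposition of $\hat A - A$ into a numerator term and a difference of reciprocal square-root normalizers, with the Lipschitz bound for $u\mapsto u^{-1/2}$ and the normalizer lower bounds absorbed into $C(X)$. If anything, you are slightly more careful than the paper, which leaves the union bound over the three pairs $(1,2),(1,1),(2,2)$ implicit and never states the non-degeneracy condition on $\hat A_{q,q}$ that you correctly identify as the reason $C(X)$ must be data-dependent.
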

\begin{proof} Let $({x_1^i}',{x_2^i}')=(x_1^i,x_2^i)$ for all $i=1,\ldots n$ except  $k$. Then

$D_{ij} = K_1(x_1^i,x_1^j) K_2(x_2^i,x_2^j) - K_1( {x_1^i}',{x_1^j}') K_2({x_2^i}',{x_2^j}')$ and note $|D_{ij}|\leq 4C_1C_2$. Then
\[
|\hat A_{1,2}(X)-\hat A_{1,2}(X')| = n^{-2}\left(2\sum_{j\neq i} |D_{ij}| + |D_{ii}|\right) \leq 4C_1C_2\frac{2n - 1}{n^2}\leq \frac{8C}{n}
\]

Applying McDiarmid, we get
\[
P\{|\hat A_{1,2}- A_{1,2}|\geq \eps\} \leq 2\exp{\left(\frac{-\eps^2n}{32C^2}\right)}
\]
which can also be read as, with probability at least $1-\delta$, $|\hat A_{1,2}- A_{1,2}|\leq \epsilon = \sqrt{(32/n)\log (2/\delta)}$

Finally, we show that $|\hat A_{i,j}-A_{i,j}|\leq \epsilon$ for $i,j\in \{1,2\}$ gives $|\hat A - A|\leq C(X) \epsilon$.

\begin{align*}
    |\hat A -  A |
    =& \left|\hat A_{1,2}(\hat A_{1,1}\hat A_{2,2})^{-1/2} - A_{1,2}(A_{1,1} A_{2,2})^{-1/2}\right| \\
    =& |\hat A_{1,2} - A_{1,2}|(\hat A_{1,1}\hat A_{2,2})^{-1/2} + A_{1,2}\left|(\hat A_{1,1}\hat A_{2,2})^{-1/2}-( A_{1,1} A_{2,2})^{-1/2}\right|\\
    =& |\hat A_{1,2} - A_{1,2}|(\hat A_{1,1}\hat A_{2,2})^{-1/2} \\
    &+ A_{1,2}\left(\left|(\hat A_{1,1}\hat A_{2,2})^{-1/2}-( A_{1,1} \hat A_{2,2})^{-1/2}\right|+ \left|(A_{1,1}\hat A_{2,2})^{-1/2}-( A_{1,1}  A_{2,2})^{-1/2}\right|\right)
\end{align*}

Lastly, we can use \[
(x^{-1/2}-y^{-1/2})=\frac{y^{1/2}-x^{1/2}}{(xy)^{1/2}}=\frac{y-x}{(xy)^{1/2}(y^{-1/2}+x^{-1/2})}
\]

\end{proof}

Now we are in the position to prove Lemma \ref{lem:stit}. To complete this, we first restate the lemma below.

\begin{lem} Suppose $\text{dim}(\cY_1)=\text{dim}(\cY_2)=d$ and $\cR_1=\cR_2$. Let $g_q\in \cG_q$ be linear with $g_q(z_q) = W_q z_q$ and $W_q\in\R^{d \times d_q}$. Let $s_{1,2}:\cZ_1\to\cZ_2$ be linear with $s_{1,2}(z_1)=S_{1,2} z_1$ and $S_{1,2}\in \R^{d_2\times d_1}$. Then $\cR^{\text{stitch}}_{1,2}(\cS_{1,2})=\cR_1(\cH_1)$.
\end{lem}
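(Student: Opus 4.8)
The plan is to show that minimizing over stitchers $s_{1,2}$ and minimizing over heads $g_1$ explore exactly the same family of predictors $\cX_1\to\cY$, so that the hypothesis $\cR_1=\cR_2$ forces the two optimal values to agree. Since $g_2$ and $f_1$ are fixed while both $g_2$ and $s_{1,2}$ are linear, every stitched predictor has the form $h_{1,2}(x)=g_2(s_{1,2}(f_1(x)))=W_2 S_{1,2}\,f_1(x)$, i.e. the fixed feature map $f_1$ followed by the single linear map $W_2S_{1,2}\in\R^{d\times d_1}$. On the other side, linearity of $g_1$ gives $\cH_1=\{x\mapsto W_1 f_1(x): W_1\in\R^{d\times d_1}\}$ with $f_1$ fixed. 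Thus the whole lemma reduces to comparing the reachable matrix sets $\{W_2 S_{1,2}: S_{1,2}\in\R^{d_2\times d_1}\}$ and $\R^{d\times d_1}$.

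First I would establish the easy inclusion. For any $S_{1,2}$, setting $W_1:=W_2S_{1,2}\in\R^{d\times d_1}$ exhibits $h_{1,2}=W_1 f_1\in\cH_1$, so $\cH_{1,2}\subseteq\cH_1$. Because $\cR_1=\cR_2$ agree as risk functionals on such predictors, restricting the common risk to the nested family yields $\cR^{\text{stitch}}_{1,2}(\cS_{1,2})=\min_{S_{1,2}}\cR_2(W_2S_{1,2}f_1)\geq \min_{W_1}\cR_1(W_1 f_1)=\cR_1(\cH_1)$.

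For the reverse inequality I would exhibit a stitcher realizing the $\cH_1$-optimal predictor. Let $W_1^\ast$ attain $\cR_1(\cH_1)$. The key step is to solve $W_2 S_{1,2}=W_1^\ast$ for $S_{1,2}$; this is solvable exactly when the columns of $W_1^\ast$ lie in $\operatorname{range}(W_2)$, which holds when $W_2$ has full row rank $d$ so that $\operatorname{range}(W_2)=\R^d$. Taking $S_{1,2}=W_2^{+}W_1^\ast$ with $W_2^{+}=W_2^\top(W_2W_2^\top)^{-1}$ the right inverse gives $W_2 S_{1,2}=W_1^\ast$, so this stitcher produces $h_1^\ast=W_1^\ast f_1$ and $\cR^{\text{stitch}}_{1,2}(\cS_{1,2})\leq \cR_2(W_1^\ast f_1)=\cR_1(W_1^\ast f_1)=\cR_1(\cH_1)$. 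Combining the two bounds gives the claimed equality.

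The main obstacle is precisely this surjectivity requirement: the reverse inequality can fail unless $\operatorname{range}(W_2)=\R^d$ (equivalently $d_2\geq d$ with $W_2$ of full row rank), since otherwise $\{W_2 S_{1,2}\}$ is a proper subspace of $\R^{d\times d_1}$ and one retains only $\cR^{\text{stitch}}_{1,2}(\cS_{1,2})\geq\cR_1(\cH_1)$. I would therefore make the rank hypothesis on $g_2$ explicit; it is automatic in the scalar-output RKHS case $d=1$ as soon as $g_2\neq 0$, matching the footnote. A secondary point worth verifying is that $\cR_1=\cR_2$ is genuinely an identity of risk functionals evaluated on predictors $\cX_1\to\cY$, so that scoring the stitched map (whose inputs originate in $\cX_1$) against the shared label and loss legitimately gives $\cR_2(W_1 f_1)=\cR_1(W_1 f_1)$.
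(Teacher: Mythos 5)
Your proposal is correct, but it reaches the conclusion by a genuinely different route than the paper. The paper's proof is an exact computation: using the normal equations $\Eof{(W_1 f_1 - y)f_1^T}=0$ satisfied by the optimal $W_1$, it obtains the orthogonal decomposition $\cR^{\text{stitch}}_{1,2}(s_{1,2}) = \|W_2 S_{1,2}-W_1\|_{\eta_1}^2 + \cR_1(\cH_1)$, then minimizes over $S_{1,2}$ to get $\cR^{\text{stitch}}_{1,2}(\cS_{1,2}) = \|\Pi_2^\perp W_1\|_{\eta_1}^2 + \cR_1(\cH_1)$, where $\Pi_2^\perp$ is the residual of the $\eta_1$-weighted projection onto the column span of $W_2$; the lemma follows when this residual vanishes. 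You instead compare the reachable hypothesis classes directly: the inclusion $\{W_2 S_{1,2} : S_{1,2}\in\R^{d_2\times d_1}\}\subseteq\R^{d\times d_1}$ gives ``$\geq$'' unconditionally, and surjectivity of $W_2$ together with the right inverse $W_2^{+}=W_2^\top(W_2 W_2^\top)^{-1}$ gives ``$\leq$''. Your argument is more elementary, needing neither the least-squares orthogonality nor the weighted-norm machinery; what the paper's approach buys in exchange is a quantitative formula, $\|\Pi_2^\perp W_1\|_{\eta_1}^2$, for the excess stitching risk when exact reachability fails. Both arguments hinge on the same rank condition on $W_2$, which the lemma statement omits: the paper sweeps it into the closing phrase ``in general, as long as $d\leq d_2$'' (i.e., for generic $W_2$ of full row rank), whereas you isolate it as an explicit hypothesis and correctly note that without it (e.g., $W_2=0$) only the one-sided bound $\cR^{\text{stitch}}_{1,2}(\cS_{1,2})\geq\cR_1(\cH_1)$ survives. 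That explicitness is an improvement in rigor over the paper's own write-up, not a gap in yours.
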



\begin{proof}
    For the linear case, there exists a vector $W_q \in \R^{d \times d_q}$, such that $g_q(z_q) = W_q z_q, z_q \in \R^{d_q}$. We can write the error of stitching as
\begin{align*}
\cR^{\text{stitch}}_{1,2}(s_{1,2}) 
&= \Eof{\|W_2 S_{1,2}f_1-y\|^2}\\
&=\Eof{\|(W_2 S_{1,2} - W_1)f_1\|^2}+\Eof{\|W_1 f_1(x)-y\|^2} \\
&=\|W_2 S_{1,2} - W_1\|_{\eta_1}^2 + \cR_1(h_1),
\end{align*}

where we used that for $W_1$ to be optimal, we require $\partial_{W_1}\cR_1(h_1)=\Eof{(W_1 f_1-y)f_1^T}=0$.
Minimizing with respect to $S_{1,2}$ yields
\begin{align*}
    \cR^{\text{stitch}}_{1,2}(\cS_{1,2}) &= \|\Pi_2^\perp W_1\|_{\eta_1}^2+\cR_1(\cH_1),
\end{align*}
where we use $\Pi_2=I - (W_2^T \text{diag}(\eta_1) W_2)^{\dagger}W_2^T \text{diag}(\eta_1)$ to denote the residual of the generalized $\eta_1$-projection onto (column) span of $W_2$. We note that in general, as long as { $d\leq d_2$}, we have $\cR^{\text{stitch}}_{1,2}(\cS_{1,2})=\cR_1(\cH_1)$.
\end{proof}

\end{document}